\newtheorem{thm}{Theorem}[]
\newtheorem{fact}[thm]{Fact}
\let\MYoriglatexcaption\caption
\renewcommand{\caption}[2][\relax]{\MYoriglatexcaption[#2]{#2}}
\begin{document}
%
\title{AliExpress Learning-To-Rank: Maximizing Online Model Performance without Going Online}
%
%
%
%

\author{Guangda Huzhang, Zhen-Jia Pang, Yongqing Gao, Yawen Liu, Weijie Shen, Wen-Ji Zhou, Qing Da, An-Xiang Zeng, Han Yu, Yang Yu~\IEEEmembership{Member,~IEEE}, Zhi-Hua Zhou~\IEEEmembership{Fellow,~IEEE}
\IEEEcompsocitemizethanks{
\IEEEcompsocthanksitem Guangda Huzhang, Wen-Ji Zhou, Qing Da, and An-Xiang Zeng are with Alibaba.
\IEEEcompsocthanksitem Zhen-Jia Pang, Yongqing Gao, Yawen Liu, Weijie Shen, Yang Yu, and Zhi-Hua Zhou are with State Key Laboratory for Novel Software Technology, Nanjing University, Nanjing 210023, China.
\IEEEcompsocthanksitem Han Yu is with Nanyang Technological University, Singapore.
\IEEEcompsocthanksitem The first two authors contributed equally to this work. Corresponding authors: Wenji Zhou (Alibaba, eric.zwj@alibaba-inc.com) and Yang Yu (Nanjing University, yuy@nju.edu.cn).}%
\thanks{Manuscript revised XXX XXXX, 2020.}}

%
%

\markboth{Journal of \LaTeX\ Class Files,~Vol.~14, No.~8, August~2015}%
{Shell \MakeLowercase{\textit{et al.}}: Bare Demo of IEEEtran.cls for Computer Society Journals}
%



\IEEEtitleabstractindextext{%
\begin{abstract}

Learning-to-rank (LTR) has become a key technology in E-commerce applications. Most existing LTR approaches follow a supervised learning paradigm from offline labeled data collected from the online system. 
However, it has been noticed that previous 
LTR models can have a good validation performance over offline validation data but have a poor online performance, and vice versa, which implies a possible large inconsistency between the offline and online evaluation. We investigate and confirm in this paper that such inconsistency exists and can have a significant impact on AliExpress Search. Reasons for the inconsistency include the ignorance of item context during the learning, and the offline data set is insufficient for learning the context. Therefore, this paper proposes an evaluator-generator framework for LTR with item context.
The framework consists of an evaluator that generalizes to evaluate recommendations involving the context, and a generator that maximizes the evaluator score by reinforcement learning, and a discriminator that ensures the generalization of the evaluator. 
Extensive experiments in simulation environments and AliExpress Search online system show that, firstly, the classic data-based metrics on the offline dataset can show significant inconsistency with online performance, and can even be misleading. Secondly, the proposed evaluator score is significantly more consistent with the online performance than common ranking metrics. 
Finally, as the consequence, our method achieves a significant improvement (\textgreater$2\%$) in terms of Conversion Rate (CR) over the industrial-level fine-tuned model in online A/B tests.
\end{abstract}

\begin{IEEEkeywords}
Learning-To-Rank, evaluation, offline-online inconsistency
\end{IEEEkeywords}}

\maketitle

\IEEEdisplaynontitleabstractindextext

%
\IEEEpeerreviewmaketitle

\IEEEraisesectionheading{\section{Introduction}\label{sec:introduction}}

%
%
%
%
\IEEEPARstart
{L}{earning-to-rank (LTR)} has been the focus of online search engine and recommender system research for enhancing profitability. In a given scenario, existing LTR approaches generally assume that the click-through rate (or conversion rate) of an item is an intrinsic property of the item, which needs to be accurately discovered with sophisticated predictive models. 
Under such an assumption, it is reasonable to focus on data-based ranking metrics such as Area Under Curve (AUC) and Normalized Discounted Cumulative Gain (NDCG) to evaluate model performance, which has led to many LTR models closely match the labels in the offline data.

\begin{figure*}[t]
\centering
\includegraphics[scale=0.345]{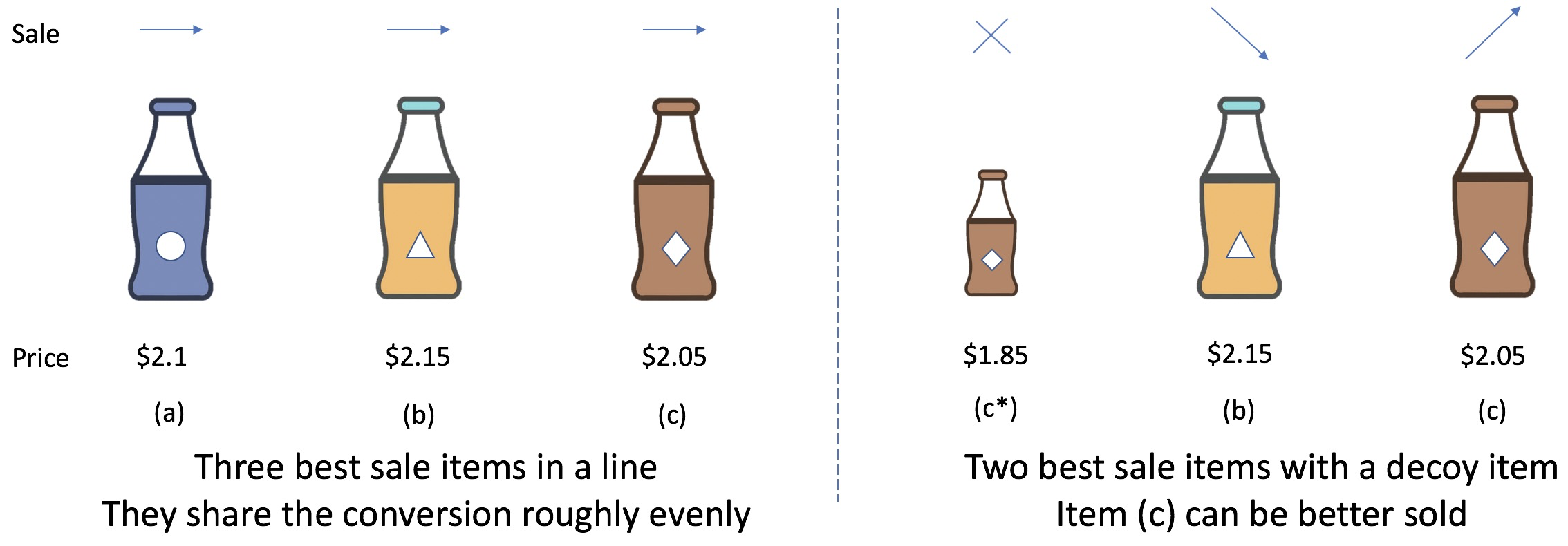}
\caption{An example of \emph{the decoy effect}, or \emph{the anchoring effect}. Standard LTR solutions produce the order on the left as the items are well sold, but the creative orders on the right may achieve better overall performance. Item (c*) plays as a decoy (a smaller but expensive version of item (c) ) that will not be purchased, while improves the impression of item (c).
}
\label{fig:decoy}
\end{figure*}

However, practitioners have reported the ``offline-online inconsistency problem'' in recent works~\cite{rohde2018recogym,beel2013comparative,rossetti2016contrasting,mcnee2006being}: \textit{a new model achieving significant improvement on offline metrics may not achieve the same improvement when deployed online}. 
One major reason of the inconsistency is that purchase intention of customers is influenced by the context.
Here is an example: if an item is shown together with similar but more expensive items, the likelihood of purchase increases, which is known as the \emph{decoy effect}. Figure \ref{fig:decoy} illustrates this with an example that context may drastically change behaviors of customers. Another commonly observed phenomenon is that, once a customer clicks an item, the chance of clicking the next item usually decreases ~\cite{wang19baidu}. 
Thus, the inconsistency is introduced since the labels logged from feedback of customers may no longer be true when the order of items changes, i.e. if we present a new order of the same items to the same customer, the same customer may make a different purchase decision. Therefore, we need to carefully consider the context in E-commerce rankings.

To address the issue of the contextual influence, the \emph{re-ranking} strategy has been proposed~\cite{Zhuang18} for practical LTR applications. The ranking system first selects a small set of candidate items; then, it determines the order of these candidate items using a re-ranking model. Different from classic LTR models, re-ranking models can enhance the understanding of the candidate items in a holistic manner. 
Most of re-ranking models are trained by supervised learning~\cite{pei2019personalized,seq2slate,Zhuang18}, and are evaluated with data-based metrics on offline datasets. However, in context-sensitive scenarios, classic re-ranking models, which follow the supervised learning paradigm, still do not explore the combinatorial space to find the best permutation of items. Thus, these models are less possible to optimize actual performance metrics, such as Conversion Rate (CR) and Gross Merchandise Volume (GMV), after deployment. For example, such models cannot generate creative order (as shown on the right of Figure \ref{fig:decoy}) which can potentially have a better performance.

In order to address the aforementioned problem, we require an evaluation approach beyond the static dataset as well as an exploration approach beyond the supervised learning paradigm. The ideal approach is to learn through interacting with the real customers in the online environment. However, such a direct interaction approach can be highly risky and costly as repeated (possibly unsuccessful) trial-and-error may negatively impact customer experience and revenue. 

In this paper, we present the Evaluator-Generator Re-ranking \emph{(EG-Rerank)} framework for E-commerce LTR systems to address the offline-online inconsistency problem while avoiding the pitfalls of direct online interaction-based evaluation. EG-Rerank consists of \emph{an evaluator} and \emph{a generator}.
The evaluator aims at modeling user feedback (i.e. predicting the probabilities of purchase given a list). It works as a virtual environment of the real online environment, and its scores can better reflect intentions of customers than labels used in static datasets.
The generator is then learned according to the evaluator through reinforcement learning.

In EG-Rerank, we have noticed that the evaluator is learned from the offline labeled data and may not generalize well for generated ranks that are very different from the data. Therefore, we introduce \emph{a discriminator} model to provide a self-confidence score for the evaluation, which is trained adversarially to the generator and guides the generator to produce lists not far from the data. Therefore, the discriminator restricts the exploration space of the generator and ensures a more trustable update. We name EG-Rerank with a discriminator as \textit{EG-Rerank+}.


This work makes the following contributions to E-commerce LTR:
\begin{itemize}
\setlength{\itemsep}{0pt}
	\item Through experiments in a simulation environment and
	AliExpress Search online system, which is one of the world's largest international retail platforms and has more than $150$ million buyers from more than $220$ countries\footnote{The information about our platform can be found in \url{ https://sell.aliexpress.com/__pc/4DYTFsSkV0.htm}}, we demonstrate the significance of the offline-online inconsistency problem. 
	\item We further show that the EG-Rerank evaluator can be a more robust objective compared to existing metrics on offline data, and can serve as a substitution of these metrics.
	\item We present the EG-Rerank and EG-Rerank+ as two approaches of the evaluator-generator framework for ranking applications. 
	\item In AliExpress Search, EG-Rerank+ consistently improves the conversion rate by 2\% over the fine-tuned industrial-level re-ranking model in online A/B tests, which translates into a significant improvement of revenue increasing.
\end{itemize}

\section{Related Work}
Learning-to-rank (LTR) solves item ranking problems through machine learning. There are several types of LTR models, including point-wise, pair-wise, list-wise, and so on. Point-wise models ~\cite{cossock08point,li07point} treat the ranking problems as classification or regression tasks for each item. Pair-wise models~\cite{joachims02pair,Burges:ranknet,burges2010ranknet} convert the original problem into the internal ranking of pairs. List-wise models ~\cite{cao07list,xia08list,ai2018learning} use well-designed loss functions to directly optimize the ranking criteria. Group-wise models~\cite{ai2019learning} and page-wise models~\cite{zhao2018deep} are proposed recently, which are similar to re-ranking models~\cite{Zhuang18}. 
However, all the above-cited methods focus on optimizing offline data-based ranking metrics like AUC and NDCG, which can produce offline performance inconsistent with actual online performance. 

\begin{figure*}[ht]
\centering
\includegraphics[scale=0.36]{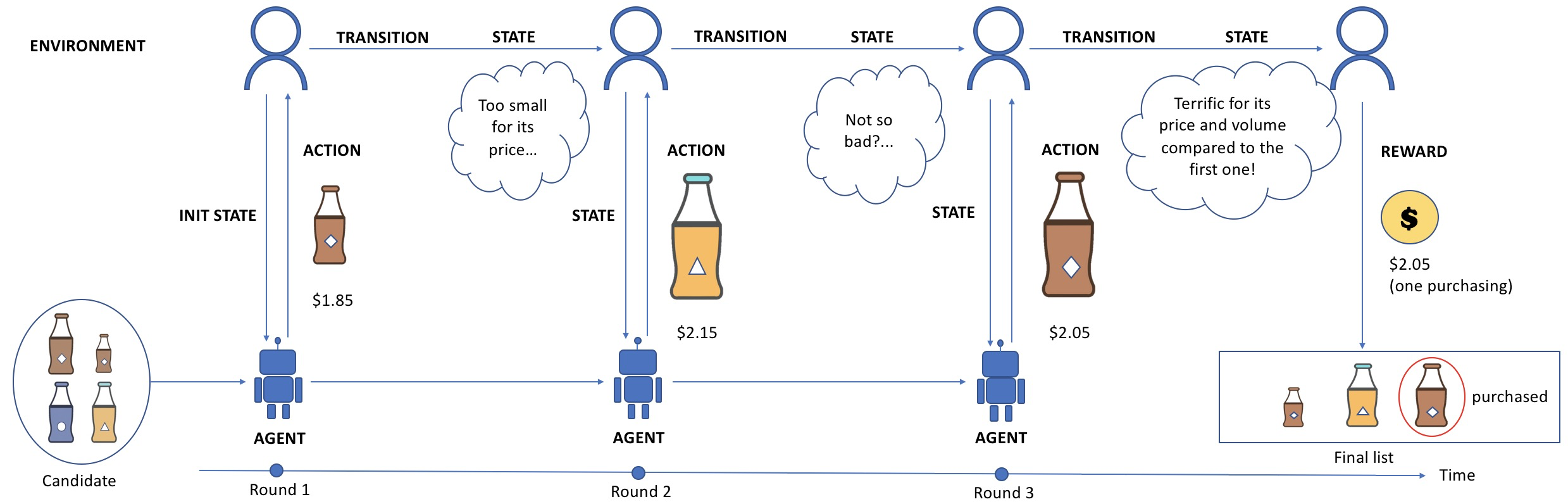}
\caption{Illustration of the Markov decision process modeling of E-commerce interactions. The environment state (schematically) describes the intention of a customer at that time, which is influenced by previous interactions.}
\label{fig:rlframework}
\end{figure*}
Slate optimization is a close topic with LTR. Similar to the objective of re-ranking, it also aims to optimize the profit of the whole slate (i.e. a list or a webpage). Recently, there are a few slate optimization studies pay more attention to metrics that consider the new orders. 
An idea is to combine a generation framework with evaluations on lists to solve slate optimizations. However, multiple sampling~\cite{wang19baidu} and heuristic search (such as beam search ~\cite{Zhuang18}) involved in such approaches can be time-consuming for online application. An independent work~\cite{ZhangMLZ0MXT19} studies a similar Evaluator-Generator reinforcement learning framework in pure offline environments for CTR predictions, which is a secondary goal in E-commerce. A solution for exact-K recommendation~\cite{gong19exact} is to imitate outputs with positive feedback through behavior cloning. Compared with such approaches, EG-Rerank+ encourages models to over-perform the experts (offline data) through generative adversarial imitation learning (GAIL)~\cite{goodfellow15gan}, which has been examined to be a better choice for imitation learning ~\cite{ho16gail,finn2016guided,shi2019virtual}. A team from Huawei~\cite{wu2018turning} proposes an alternative offline metric for GMV optimization models.

Reinforcement learning~\cite{sutton1998introduction} algorithms aim to find the optimal policy that maximizes the expected return. Proximal policy optimization (PPO)~\cite{schulman2017proximal} is one of them, which optimizes a surrogate objective function by using the stochastic gradient ascent. PPO retains some benefits of region policy optimization (TRPO)~\cite{schulman2015trust}, but it is much simpler to implement with better sample complexity. Some works~\cite{slateQ,topkoff} use a pure reinforcement learning method for slate optimization and achieved good performance in online environments. Their methods focus on a series of slates and optimize long term value (LTV) of user feedback. Different from their works, we focus on the fundamental challenge to optimize a single slate with only one round of interaction.

The offline-online inconsistency problem has been reported by other studies~\cite{rohde2018recogym,beel2013comparative,rossetti2016contrasting,mcnee2006being}. Unbiased learning is a related topic to mitigate selection bias. These solutions (e.g., Propensity SVM-Rank~\cite{joachims2017unbiased}, SVM PropDCG and Deep PropDCG~\cite{agarwal2019general}) can increase the accuracy of trained models. Different from our focus, these models do not consider context and do not attempt to find the best order.


\begin{figure*}[t]
\centering
\includegraphics[scale=0.54]{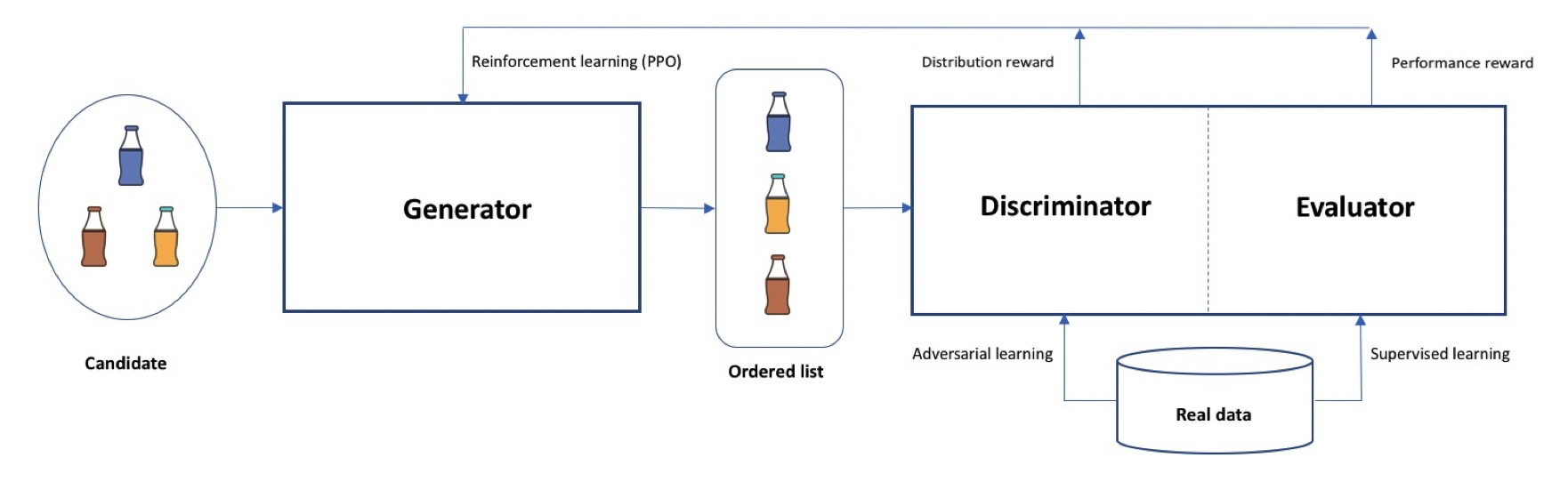}
\vspace{-1em}

\caption{{The architecture of EG-Rerank+}. For EG-Rerank, there is no discriminator module. The evaluator is trained and fixed first, and then we can train the generator with rewards provided by the evaluator. The generator and the discriminator are trained simultaneously.}
\label{fig:framework}
\end{figure*}
\section{Preliminary}

\subsection{Problem Definition}
The objective of a re-ranking model is to find the best permutation of $N$ candidate items. In this paper, we assume that customers browse the list of items displayed in a webpage from top to bottom. Let $o_{1:i}$ denote the arrangement of top $i$ items. When a customer $u$ looks at a list $o$, the probability that customer $u$ purchases the $i$-th item can be expressed as $p(C_i|o_{1:i}, u)$, where \emph{the random variable $C_i$} denotes the event that customer $u$ purchases the $i$-th item given the layout $o_{1:i}$. Classic LTR models treat $p(C_i|o_{1:i}, u)$ as $p(C_i| u)$, where the items have no mutual influence. Different from these models, re-ranking models have complete knowledge of candidate items.

We use $C$ to denote the number of purchases for the whole list (i.e. the summation of $C_i$). Then, we have a formula that computes the expected number of purchases for list $o$ and customer $u$ as follows:
\begin{equation}
\mathbb{E}(C|o, u) = \sum_{i = 1} ^ N \mathbb{E}(C_i|o, u) = \sum_{i = 1} ^ N p(C_i|o_{1:i}, u) 
\label{eq:tar}
\end{equation}

Given a fixed set of candidate items $I$ and a fixed customer $u$, the goal is to find a permutation $o^*$ that maximizes the expected number of purchases:
\begin{equation}
o^* = \operatorname*{argmax}_{o \in perm(I)} \mathbb{E}(C|o, u) = \operatorname*{argmax}_{o \in perm(I)}\sum_{i = 1} ^ N p(C_i|o_{1:i}, u) 
\end{equation}
where $perm(I)$ is the set of all permutations of set $I$.

\subsection{Reinforcement Learning Re-ranking}
\label{section:rldef}
For an ordered list of items, we assume that 
the customer browses each item from top to bottom and decides whether to purchase or not. Given a candidate set $I$ containing $N$ items and the customer $u$, we model a re-ranking task as a Markov Decision Process (MDP) with a tuple of $\langle S, A, P, R\rangle$ with discount rate $\gamma=1$ (also see Figure~\ref{fig:rlframework}):

\begin{itemize}
	\item State space $S$: a state $s_t \in S$ consists of the user feature and an ordered list of selected items before time $t$.
	Concretely, we have $s_t = (u, o_t)$ and the observation $o_t = (x_0, x_1, ..., x_k, ..., x_{t-1})$, where $x_k$ is the item selected from the candidate set $I$ at time $k$. Specially, the initial list $o_0 = ()$ is an empty list.
	
	\item Action space $A$: an action $a_t \in A$ consists of a single item $x_t$ which is selected from the items set $I \setminus \{x_0, x_1, ..., x_{t-1}\}$.
	
	\item Reward $R$: a reward $r_t$ can be written as $r_t = R(s_t, a_t) = R(s_t, x_t) = p(x_t |s_t) $, where $p(x_t|s_t)$ indicates the probability that the customer purchase the item $x_t$. 
	
	\item Transition probabilities $P$: After the action $a_t$ has been chosen (i.e. $x_t$), the state deterministically transitions to the next state $s_{t+1}$, where $s_{t+1} = (u, o_{t+1}) = (u, (x_0, x_1, ..., x_{t-1}, x_t))$ so that $P_{s_t, s_{t+1}} = 1$ (and $0$ for others). 
	
\end{itemize}

In our work, we design an evaluator to estimate the purchase probability $p(x_t|s_t)$. The evaluator plays an important role in training, and can be learned by any discriminative machine learning method. The reward, which is given by the evaluator, helps train the generator through reinforcement learning. Models trained by supervised learning can select the short-term best action in each step, such as finding the item that maximizes purchase probability $p(x_t|s_t)$, but the greedy strategy cannot optimize the long-term reward.

\subsection{LTR in Online Systems}
A widely accepted view is that people pay more attention to items which appear early in the list. Therefore, it makes sense to design a model to find the best-matched items and put them at the top of a list. 
Following this guideline, we can reduce the ranking task to a conversion rate prediction task as most LTR models do. 
However, the conversion rate depends on the context in practical applications. A study~\cite{wang19baidu} has shown when a customer clicks an item (news in their case), the chance of clicking the next item decreases. In situations in which people never click adjacent items, greedy ranking approaches produce improper orders. 
The above example shows that even an accurate model with high AUC and NDCG scores can perform poorly in some scenarios due to contextual factors. Therefore, \emph{comparing models by data-based metrics in a static offline dataset may be misleading}. It is desirable for E-Commerce LTR research to develop a more robust evaluation for examining models.

\section{The Proposed Approach}
We propose to use an evaluator-generator framework that trains the generator of lists not from fixed labeled data or optimizing data-based metrics, but learn to generate lists that maximize the evaluator scores.

In this framework, the evaluator needs to evaluate the performance of any given list, and it is expected to closely track the actual online performance. To train a list generator that maximizes the evaluator score, a natural tool is to employ reinforcement learning to transfer gradient information. As a result, we propose the evaluator-generator re-ranking approach (EG-Rerank). Moreover, to ensure that the evaluator gives trustable scores, we further introduce a discriminator to tell whether the generated list is far from the data. This results in the implemented version, EG-Rerank+, contains an evaluator, a generator, and a discriminator:
\begin{itemize}
  \item The trained evaluator predicts the performance of the given lists. We use a supervised learning approach.
  \item The trained generator produces orders with high scores (from the evaluator). We propose a reinforcement learning approach to train it with rewards provided by the evaluator and the discriminator. 
  \item The trained discriminator measures how much the predictions of the evaluator can be trusted. We design an adversarial learning approach with the outputs of the generator and labeled data. Without a discriminator, the evaluator might wrongly evaluate the performance of lists which are much different from the training data.
\end{itemize}

Figure~\ref{fig:framework} shows the architecture of EG-Rerank+. Details 
is described in the following subsections.


\begin{figure*}[t]
\centering
\includegraphics[scale=0.321]{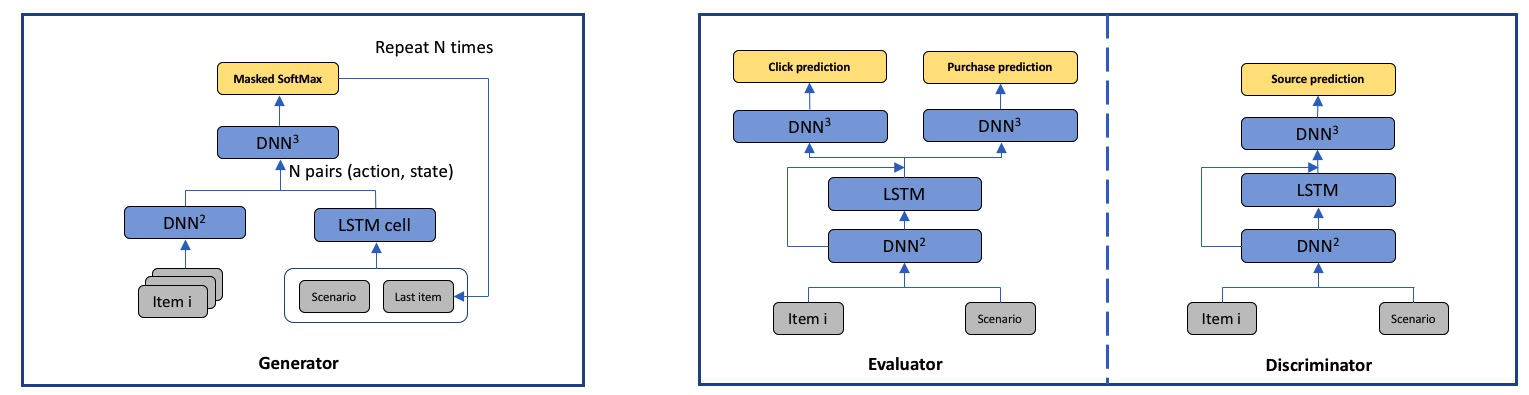}
\caption{The network structure of the generator, the evaluator and the discriminator.}
\label{fig:framework2}
\end{figure*}

\subsection{The Evaluator}
The evaluator is the key model in EG-rerank. It produces the score, or the reward, for training the generator. The structure of our evaluator is shown in Figure~\ref{fig:framework2}. 
The input includes the features of a list of items $(x_1, x_2, ..., x_N)$ and the scenario feature $bg$. The scenario feature is independent from the items, and provides additional information such as date, language, and user profiles. Let $DNN^k$ denote a network with $k$ fully connected layers. We use $DNN^2$ to extract the hidden feature $s_i$ for each item and an LSTM cell to process the contextual state $h_i$ of the first $i$ items: 
\begin{equation}
s_i = DNN^2([x_i, bg]), \quad h_i = LSTM(h_{i-1}, s_i)
\end{equation}
We include another $DNN^3$ to estimate the conversion rate of item $i$ under state $h_i$ and $h_0$ is initialized with encoding of candidate set $I$. In addition, we co-train~\cite{ma2018entire} a click-through rate prediction task to overcome the issue of the sparsity of purchased samples. It helps the model learn common knowledge for predicting clicks and purchases. 
\begin{align}
p_i &= \mathrm{sigmoid}(DNN^3([s_i, h_i])) \\
p^{click}_i &= \mathrm{sigmoid}(DNN^3_{click}([s_i, h_i]))
\end{align}
The loss function is a weighted sum of both objectives. The parameter $\alpha$ should be chosen according to the ratio between the number of purchases and the number of clicks. Let label $y_i$ and $y_i^{click}$ be feedback $\{0, 1\}$, and $\theta$ be the parameters of the model, we have
\begin{equation}
\begin{aligned}
L(\theta) &= \sum \mathrm{cross\_entropy}(p_i, y_i) \\ 
&+ \alpha * \sum \mathrm{cross\_entropy}(p^{click}_i, y^{click}_i) 
\end{aligned}
\end{equation}

{\bf Requirements for the evaluator:}
Different from classic conversion prediction models, we want the evaluator to evaluate the quality of lists that closely reflect online performance. We provide several minimal requirements in our experiments. Based on logs of the industrial ranking system, the evaluator should satisfy the following requirements:
\begin{itemize}
  \item It should be able to identify the better one between a logged list and the reversed list.
  \item It should be able to identify the better one between a logged list and a random list with the same items.
  \item It should be able to identify the better one with high confidence between two lists with different labels (e.g., one has a purchase and the other does not).
\end{itemize}

In our online experiment, the proposed evaluator achieves the first two design requirements with closely to accuracy of 100\%. For the third one, it achieves an accuracy of $0.8734$, implying that it can identify the better list with high confidence. More details will be discussed in the experimental evaluation section.

\subsection{The Generator}
With a reliable evaluator, the generator can autonomously explore for the best order. 
To encode inputs to the generator, we partition the computation into two parts: the feature of the current state and the feature of available actions:
 \begin{equation}
 \begin{aligned}
 h_{s}^{(t)} &= LSTM(h_{s}^{(t-1)}, DNN^2([bg, x_{out_{t-1}}]))\\ h_{a}^{(i)}  &= DNN^2(x_i)
 \end{aligned}
 \end{equation}
where $x_{out_{t-1}}$ is the item picked by the user in step $t-1$. Note that $h_{a}^{(i)}$ is independent from $t$ and can be reused. The output of the encoding process contains $N$ vectors, where $enc_i^{(t)}$ is a combination of a candidate item and a state. 
 \begin{equation}
 enc^{(t)} = [enc_i^{(t)}]_{i=1}^{n} = [h_{a}^{(i)}; h_{s}^{(t)}]_{i=1}^{n}
 \end{equation}
Then, the generator samples the next action according to $softmax(DNN^3(enc^{(t)}_i))$ for the unpicked item $i$.

{\bf Training algorithm:}
We use a state-of-the-art reinforcement learning PPO~\cite{schulman2017proximal} to optimize the generator with feedback from the evaluator.
As a recent empirical work revealed, estimating the value of state by multiple sampling is beneficial 
in a combinatorial space~\cite{kool2019buy}. 
In our work, we use this strategy to estimate the value and will show its advantage in the experiment section. 
Concretely, we sample $k$ trajectories (i.e. $k$ generated complete lists) $\tau_{i}^{s_t}$ with the current policy start from a state $s_t$. By these trajectories, we can calculate the estimate of state value, denoted as $\widetilde{V}(s_t)$. Other notations can be found in Subsection~\ref{section:rldef}.
\begin{equation}
\widetilde{V}(s_t) = \frac{1}{k}\sum_{i=1}^{k}{\sum_{(s, a) \in \tau_{i}^{s_t}}R(s, a)}
\end{equation}
Here $\sum_{(s, a) \in \tau_{i}^{s_t}}R(s, a)$ is the return, which is also the total number of purchase from the $t$-th item to the $n$-th item. Moreover, we apply the standard deviation of value estimation to the loss function to make training more stable. The standard deviation of $\widetilde{V}(s)$ can be formulated as 
\begin{equation}
\sigma_{\widetilde{V}}(s_t) = \sqrt{\frac{1}{k}\sum_{i=1}^{k}{\left(\sum_{(s, a) \in \tau_{i}^{s_t}}{R(s, a)}-\widetilde{V}(s)\right)^2}}
\end{equation}
The loss function of EG-Rerank is written as 
\begin{equation}
L(\theta) = -\hat{\mathbb{E}}_{t}[\text{min}(r_t(\theta)\hat{A}_t, \text{clip}(r_t(\theta), 1-\epsilon, 1+\epsilon)\hat{A}_t)] 
\end{equation}

\begin{equation}
\hat{A}_t = \frac{\sum_{i=t}^{N}{R(s_i, a_i)} - \widetilde{V}(s_t)}{\sigma_{\widetilde{V}}(s_t)}
\end{equation}

\begin{equation}
r_t(\theta)= \frac{\pi_\theta(a_t|s_t)}{\pi_{\theta_{old}}(a_t|s_t)}
\end{equation}
Here, $clip(x, l, r)$ is equivalent to $min(max(x, l), r)$. Policy $\pi_{\theta_{old}}$ is the one collecting rewards and $\pi_\theta$ is the current policy. Incorporating $r_t(\theta)$ protects the model from unstable policy changes. A generator generates an order by sampling a trajectory and send it to the evaluator to obtain rewards. Then it updates its parameters with the above loss function.
\subsection{EG-Rerank+}

All supervised LTR models have a commonly issue: the data we use to train and validate the model is from the offline data, which is collected by a specific system and is generally has a significant bias on selecting samples. It implies that the model has not been well trained in the unseen lists, so it may not give a correct prediction on the unseen lists.


As our evaluator follows the supervised learning paradigm, it also has the above risk. To visualize this issue, we set up a toy task: finding the best permutation of $30$ items. A simple rule determines the score of permutations, and the score completely depends on the order of items. We train a model to regress this score with a training set, which is collected by a random but biased strategy. At the same time, we collect a biased validation set. After that, we compute the prediction error of the model in the biased validation set and the unbiased test dataset (which is collected by a random strategy). 

\begin{figure}[h]
\centering
\includegraphics[scale=0.246]{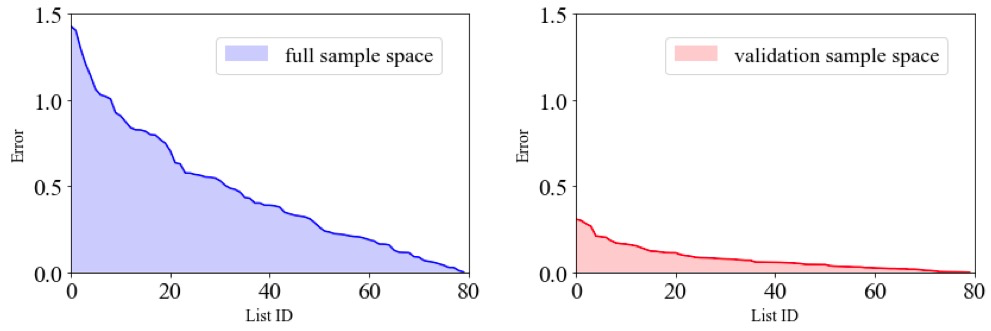}
\caption{The error (difference between prediction and ground-truth) in the full sample space (left) and the validation set (right) in a simulation environment.
The list with a smaller ID should have a greater error as we sorted them in a decreasing order of error.}
\label{fig:ebias}
\end{figure}

The above experiment tries to reproduce the inaccuracy of an evaluator when the input distribution is different from the lists in the training set. 
For better virtualization, we arrange lists in \emph{a decreasing order} of error as the Figure~\ref{fig:ebias} shows. It is clear to see the error is higher in the unseen distribution.

\subsubsection{Design of EG-Rerank+}
To ensure that the generator closely mimics the real lists, we introduce a sequential discriminator. The designed discriminator prevents the generator from outputting lists which are too different from the training data (i.e. when the evaluator may provide wrong supervision). With the guidance of the discriminator, the generator will explore solutions in a more reliable and regular space. 

Discriminator score $D(x|w)$ represents how a list is possibly from real data as judged by the discriminator. Formally, for an ordered item list $x = (x_1,x_2,...,x_N)$, the discriminator will score each item, and we let $D(x|w)$ be the summation of scores of items. A sequential structure produces $D(x|w)$ as follows:
\begin{equation}
  s_i = DNN^2([x_i, bg])
\end{equation}
\begin{equation}
  h_i = LSTM(h_{i-1}, s_i)
\end{equation}
\begin{equation}
  score_i=DNN^3([s_i, h_i])
\end{equation}
\begin{equation}
  D(x|w) = \sum{score_i}
\end{equation}
We aim to train the discriminator to distinguish generated lists $x$ from real lists $x'$. We want to maximize the below expectation and the gradient has the form
\begin{equation}
\hat{\mathbb{E}}_{x}\left[\nabla_{w} \log \left(D(x|w)\right)\right]+\hat{\mathbb{E}}_{x'}\left[\nabla_{w} \log \left(1-D(x'|w)\right)\right]
\end{equation}
Finally, we take the output of the discriminator as part of the reward for learning under EG-Rerank+ with an adjustable parameter $\beta$:
\begin{equation}
R^+(s_i, a_i) = R(s_i, a_i) + \beta * score_i
\end{equation}

\subsubsection{Advantages of EG-Rerank+} 
To reveal the advantage of EG-Rerank+, we set up an auxiliary experiment by real data. 
We collect thousands of real lists from our online system, where all lists are generated from the same search query ``phone screen protectors''. Figure~\ref{fig:ebias2} visualizes the distribution of real lists, outputs of EG-Rerank, and outputs of EG-Rerank+. 
We plot Figure~\ref{fig:ebias2} with t-SNE for data dimensionality reduction, in which the sources (i.e. generated from which model) are not revealed. To reduce the noise from the online environment, we remove $20\%$ records which are farthest away from the centroid in their groups.

\begin{figure}[h]
\centering
\includegraphics[scale=0.18]{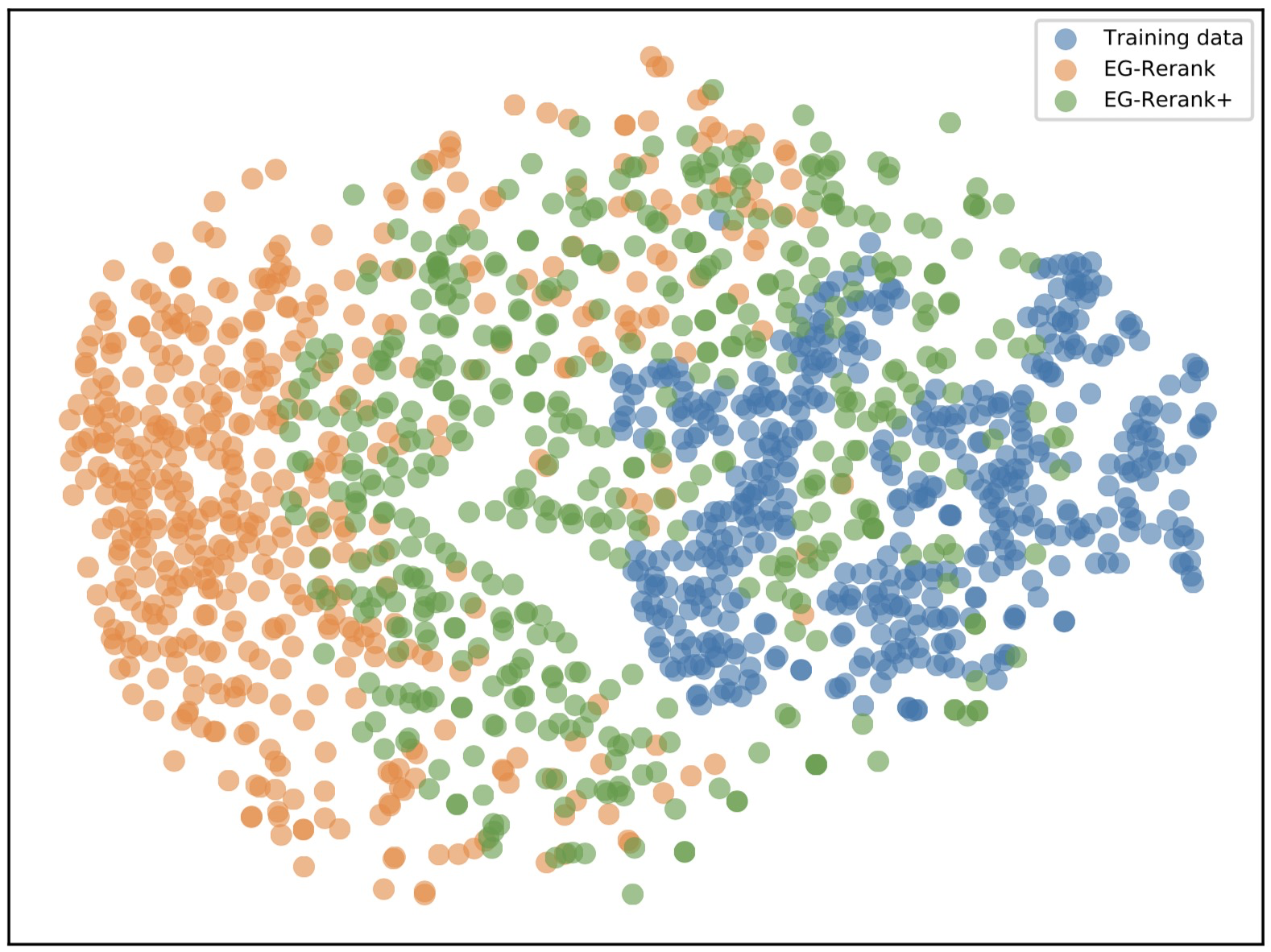}
\caption{{The distribution of real lists and generated lists}. }
\label{fig:ebias2}
\end{figure}

We can observe that outputs of EG-Rerank+ are closer to the real lists produced by a mature system than the outputs of EG-Rerank. 
The discriminator idea is a strategy to guide the exploration without having to engage in costly online interactions. In practice, it is possible to deploy online exploration strategies to calibrate the evaluator or the generator. Compare to these methods, EG-Rerank+ is easy to implement, poses no risk to customer experience, and can be examined in offline environments.

\begin{figure*}[ht]
\centering
\includegraphics[scale=0.312]{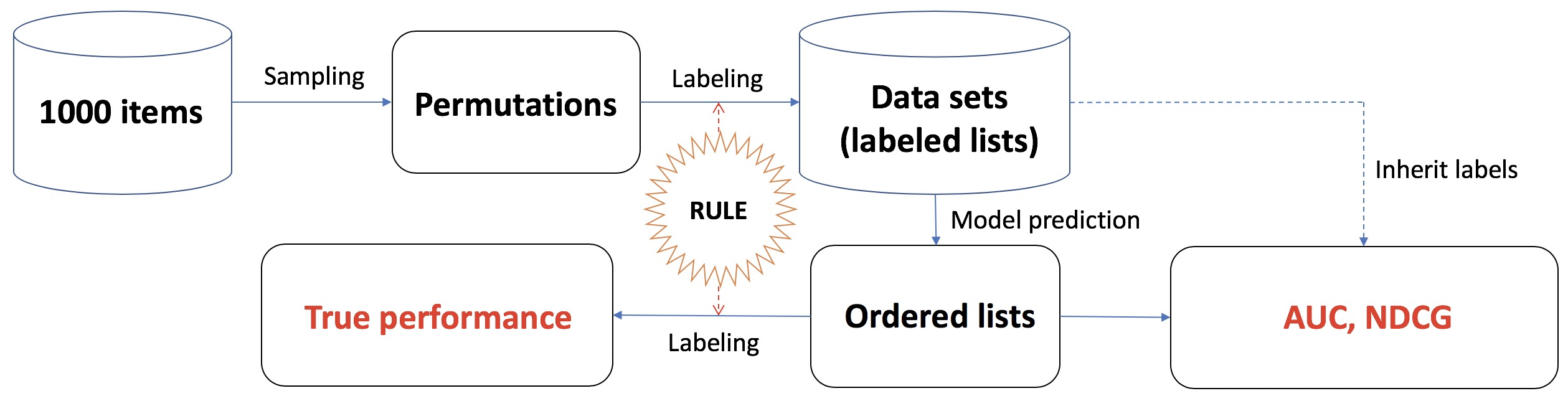}
\caption{{The workflow of the simulation environment}. An extra interaction with simulator RULE is necessary to get the true performance.}
\label{fig:egframe}
\end{figure*}

\section{Experimental Evaluation}


For re-ranking tasks, an evaluation based on implicit feedback from existing datasets may lead the inconsistency problem: offline data can hardly reflect the accurate feedback in an unseen distribution. Therefore, a \emph{dynamic judgement} which can produce feedback for every possible output from models needs to be introduced. We consider the two examinations that satisfy the \emph{dynamic} condition:
\begin{itemize}
  \item \emph{Examination in the simulation environment}. To the best of our knowledge, it is the only offline choice to roughly evaluate model performance. With well-defined simulation protocols, the experimental results can be easily reproduced.
  \item \emph{Examination in AliExpress Search online system}. Although online trials are expensive and not reproducible, online A/B testing is the only gold standard of evaluation.
\end{itemize}
In the following contents, we conduct our models in both of them and analyze two research questions. \textbf{(RQ1)} Does the evaluation in the static dataset evaluation mislead the learning? \textbf{(RQ2)} Does EG-Rerank+ achieve the best performance among existing re-ranking algorithms?




\subsection{Experiment in a Simulation Environment}

We demonstrate that classic data-based metrics are inconsistent with our objective (e.g. the number of conversion) even in a simple LTR scenario with mutual influence information between items. The offline experiments is conducted in a simulation environment\footnote{The code of the environment are released in github https://github.com/gaoyq95/RerankSim}.


\subsubsection{The Simulation Environment}
Our simulation environment borrows ideas from the design of RecSim~\cite{ie2019recsim}. We prepare $1,000$ items each containing a random feature of length $30$, and sample $400,000$ lists of size $15$ uniformly from the $1,000$ items. The simulator, which acts as referee in the environment, labels these sampled lists. Then, we divide these lists into two parts of $300,000$ and $100,000$ as the training data and the testing data. To get the true performance, we need an extra interaction with the simulator as Figure~\ref{fig:egframe} shows.

Let $L$ be a list and $L_i$ be the $i$-th item in list $L$. Each item receives two scores from the simulator (the ``RULE'' in Figure~\ref{fig:egframe}):
\begin{equation}
  f(L_i,L) = \alpha_i r(L_i) + \beta_i g(L_{1:i})
\end{equation}
The base conversion rate $r(L_i)\in[0,1]$ of item $L_i$ is computed using a randomly weighted DNN. For mutual influence $g(L_{1:i})$, we consider the similarity between features of selected items. Intuitively, it is better to avoid adjacently displaying similar items. In our experiment, mutual influence $g$ deploys $1$ minus cosine distance in the range of $[0, 1]$, between $L_i$ and the average of $L_{1:i}$. In addition, 
we let constant $\alpha_i$ decrease and constant $\beta_i$ increase according to position $i$ and $\alpha_i + \beta_i = 1$. Therefore, score $f(L_i, L)$ must be a real number in $[0, 1]$ and each item receives a $\{0, 1\}$ label by sampling with the probability $f(L_i, L)$. We regard the label $1$ as a purchase event on an item. 

The score of a list, which is the summation of scores of all items, equals the expected number of purchases. We denote this score as the \emph{true score} on a list, and regard the true score as the main objective of a model. Besides, we use labels (collected by old orders) to compare the accuracy of data-based metrics. 

\begin{table*}[ht]
\centering
\small
\resizebox{.99\textwidth}{!}{
\begin{tabular}{cccccc}
\toprule
 Method groups& Method& GAUC &NDCG & Evaluator score & True score\\ 
\toprule

\multirow{3}{*}{Non-deep methods} 
 & SVM-Rank & 0.93948$\pm$0.00002&0.96826$\pm$0.00002&5.56269$\pm$0.00004&5.57967$\pm$0.00011 \\
 & Prop SVM-Rank & 0.93661$\pm$0.00001&0.97079$\pm$0.00001&5.56684$\pm$0.00001&5.58434$\pm$0.00001 \\
 & LambdaMART & 0.93188$\pm$0.00154&{0.97697$\pm$0.00106}&5.58462$\pm$0.00765&5.59179$\pm$0.00827 \\
 \midrule[0.1pt]

\multirow{3}{*}{Point-wise methods} & miDNN (MSE loss) &0.94767$\pm$0.00006&0.97266$\pm$0.00007&5.59288$\pm$0.00230&5.53829$\pm$0.00304 \\
 & miDNN (CE loss) & 0.94764$\pm$0.00006&0.97282$\pm$0.00004&5.61278$\pm$0.00462&5.56168$\pm$0.00572\\
 & miDNN (Hinge loss) & 0.93957$\pm$0.00114&0.96863$\pm$0.00063&5.56862$\pm$0.00826&5.53250$\pm$0.00616\\
 \midrule[0.1pt]

\multirow{2}{*}{Pair-wise methods} & RankNet (Logistic loss) &0.94724$\pm$0.00002&0.97249$\pm$0.00002&5.64835$\pm$0.00153&5.59949$\pm$0.00110 \\
 & RankNet* (Hinge loss) & {0.94813$\pm$0.00003}&0.97302$\pm$0.00003&5.59475$\pm$0.00355&5.54473$\pm$0.00291\\
 \midrule[0.1pt]

\multirow{3}{*}{List-wise methods} & ListNet & 0.94770$\pm$0.00004&0.97279$\pm$0.00003&5.62637$\pm$0.00374&5.57817$\pm$0.00363 \\
& ListMLE & 0.94712$\pm$0.00004&0.97261$\pm$0.00003&5.60474$\pm$0.00119&5.56719$\pm$0.00155\\
& SoftRank & 0.94728$\pm$0.00007&0.97270$\pm$0.00004&5.60824$\pm$0.00355&5.56788$\pm$0.00101 \\
 \midrule[0.1pt]
\multirow{2}{*}{Group-wise methods} & GSF(5) & 0.94792$\pm$0.00003&0.97286$\pm$0.00001&5.61773$\pm$0.00221&5.56135$\pm$0.00255 \\
& GSF(10) & 0.94807$\pm$0.00003&0.97282$\pm$0.00004&5.61838$\pm$0.00311&5.56526$\pm$0.00335 \\
 \midrule[0.1pt]

\multirow{2}{*}{Advanced methods}
& seq2slate & 0.64042$\pm$0.00063&0.75267$\pm$0.00057&5.63576$\pm$0.00326&5.62322$\pm$0.00358\\
& PRM & 0.94787$\pm$0.00003&0.97283$\pm$0.00002&5.78159$\pm$0.00050&5.71738$\pm$0.00068\\
 \midrule[1pt]

\multirow{9}{*}{Evaluator-Generator}
& DirectE  &0.96365$\pm$0.00029&0.98099$\pm$0.00014&5.63850$\pm$0.00196&5.59867$\pm$0.00228
\\
& GreedyE & 0.92918$\pm$0.00081&0.96388$\pm$0.00045&5.77470$\pm$0.00353&5.73948$\pm$0.00327
\\
& DQN (SlateQ) & 0.71726$\pm$0.03948 &0.79007$\pm$0.01662&6.18838$\pm$0.07606 &6.19378$\pm$0.05797
\\
& Monte Carlo Control&0.77541$\pm$0.00957 & 0.81201$\pm$0.01019 & 6.36602$\pm$0.06473 & 6.29952$\pm$0.04813
\\
& CTR-AC& 0.56332$\pm$0.07063 	& 0.72339$\pm$0.04117 	& 5.61957$\pm$0.05035 	& 5.76285$\pm$0.04597\\
& CTR-AC+ &0.48923$\pm$0.05965	&0.65473$\pm$0.02030 	& 7.03191$\pm$0.03798 	& 6.71545$\pm$0.02517\\
& PPO& 0.74747$\pm$0.00691 	& 0.76107$\pm$0.00530 	& 7.10018$\pm$0.02135 	& 6.78382$\pm$0.01566\\
& \textbf{EG-Rerank}  & 0.55774$\pm$0.00442&0.69925$\pm$0.00279&7.36499$\pm$0.01294&6.96224$\pm$0.00972\\
& \textbf{EG-Rerank+}  & 0.51376$\pm$0.01181&0.69171$\pm$0.00338&\textbf{7.36847$\pm$0.01341}&\textbf{6.97283$\pm$0.01240} \\ 
\bottomrule
\end{tabular}
}
\vspace{0.8em}
\caption{Models performance in the simulation environment.}
\vspace{-1.6em}

\label{tab:offlineexp}
\end{table*}

In a real environment, top-ranked items receive more attention from customers. We calculate the average conversion rate for each position in our simulation environment with training data. The result is that the conversion rate decreases from $0.5$ to $0.3$ as the position increases.
The property motivate us to display the best items in the top as we do in online systems. However, most of previous simulation experiments (such as \cite{jiang19cave}, \cite{ie2019recsim}, and \cite{rohde2018recogym}) did not promise the above important condition. It brings unfairness for ranking methods which place the best items in the top.

\subsubsection{Comparison Baselines}
We arrange the models in several groups:
\begin{itemize}
  \item \textbf{Non-deep methods}. We use SVM-Rank~\cite{joachims2017unbiased} and LambdaMART~\cite{burges2010ranknet} to represent SVM and boosting LTR methods. We further add Propensity SVM-Rank~\cite{joachims2017unbiased} to examine benefits of de-biasing,.
  \item \textbf{Point-wise methods}. We use miDNN~\cite{Zhuang18} and apply mean square root error (MSE) loss, cross entropy (CE) loss and hinge loss on it.
  \item \textbf{Pair-wise methods}. We apply pair-wise logistic loss and hinge loss on miDNN to represent pair-wise methods which follow RankNet~\cite{Burges:ranknet}. RankNet* is the pair-wise method we deployed online. 
  \item \textbf{List-wise methods}. We include ListNet ~\cite{cao07list}, ListMLE~\cite{ai2018learning}, and SoftRank~\cite{taylor2008softrank} to demonstrate the performance of list-wise approachings. 
  \item \textbf{Group-wise methods}. We examine $GSF(5)$ and $GSF(10)$ with the sampling trick introduced in the group-wise scoring framework paper~\cite{ai2019learning} .
  \item \textbf{Advanced Re-ranking}. We add a pointer network solution seq2slate~\cite{vinyals2015pointer,seq2slate} with cross entropy loss into our experiment, and a industial re-ranking method PRM with a random initialization~\cite{pei2019personalized}.
  
  \item \textbf{Evaluator-Generator}. GreedyE is a greedy strategy to pick the item which can produce the maximal evaluator score immediately. DirectE uses the evaluator as a classic LTR model to generate lists. Deep Q Network (DQN) is a traditional Q-learning algorithm, and Monte Carlo Control is a substitute with samplings which brings less bias and more computations. DQN can be regarded as a degeneration version of SlateQ~\cite{slateQ} with $k=1$ without involving a CTR prediction (in our testing, it is harmful to model performance). CTR-AC is a recent work that follows REINFORCE~\cite{ZhangMLZ0MXT19}. To enhance it, we apply the advantage function to CTR-AC as CTR-AC+.
  EG-Rerank and EG-Rerank+ (our solutions), the advanced function are computed by sampling instead of using $V$ model in PPO.
  
  \item \textbf{Evaluators}. Note \emph{all involved evaluators are trained by offline data}, and play as an environment in the training phase of the above methods.
\end{itemize}

\subsubsection{Result analysis}
We take four indicators (NDCG, GAUC, Evaluator Score and True score) into consideration:
\begin{itemize}
  \item \emph{GAUC (offline) and NDCG} are two commonly used ranking metrics.
  \item \emph{Evaluator score} is the prediction from the evaluator.
  \item \emph{True score} is the goal that models aim to maximize. It is produced by the simulator as we described.
\end{itemize} The experiments are repeated $10$ times with independent data. We display the mean performance and standard deviation in the Table~\ref{tab:offlineexp}.

{\textbf{Metric accuracy:}} In the above table, GAUC and NDCG are consistent with each other, but they are inconsistent with the evaluator score as well as the actual performance. Concretely, we enumerate each pair of metrics and compute normalized Kendall Tau Distance as Table~\ref{tab:cor} shows.
\begin{table}[h]
\centering
\small
\resizebox{.45\textwidth}{!}{

\begin{tabular}{c|cccc}
\toprule
 & GAUC & NDCG & Evaluator score& True score\\
\midrule
GAUC & - & \textbf{0.084} & 0.645 & 0.718 \\
NDCG&\textbf{0.084} & - & 0.658 & 0.730 \\
Evaluator score&0.645 &0.658 & - & \textbf{0.099} \\
True score&0.718 &0.730 & \textbf{0.099} & - \\
\bottomrule
\end{tabular}}
\vspace{0.8em}
\caption{Normalized Kendall Tau Distance results for each pair of metrics.}
\vspace{-1.2em}
\label{tab:cor}
\end{table}

The result shows that GAUC and NDCG can only predict the better method between two candidates with a poor probability (less than $30\%$) by the definition of Normalized Kendall Tau Distance. Instead of that, evaluator can correctly decide the better method with a high probability (greater than $90\%$). This fact is again provide a strong evidence to \textbf{RQ1} that data-based metrics may mislead the learning, and supports that evaluator score is potentially a proper metric for E-commerce LTR model examination.

{\textbf{Performance of Supervised Learning Methods:}} Classic supervised learning methods get high NDCG and GAUC (more than 0.9) in the simulation environment, which implies they can accurately recover the labels in offline data. However, they have poor true score. 
In contrast, most of methods with the Evaluator-Generator framework achieve lower NDCG and GAUC, but output much better lists according to the true score metric. This fact is also a positive example for \textbf{RQ1}.

{\textbf{Performance of Advanced Methods:}}
Unbiased methods like propensity SVM-Rank, as well as sophisticated re-ranking methods such as seq2slate and PRM, can bring a small improvement on the true score, but it is not competitive with methods follow Evaluator-Generator frameworks. We also include reinforcement learning solutions such as CTR-AC to show our competitiveness. Therefore, together with the above paragraph, we partially answer \textbf{RQ2}: in our simulation experiment, EG-Rerank+ outperforms existing re-ranking methods. On the other hand, traditional $Q(s, a)$ and $V(s)$ are difficult to estimate in this ranking task. The comparison between DQN and Monte Carlo Control, as well as the comparison between PPO and EG-Rerank, implies that replacing the value function with sampling can greatly improve the performance. Benefiting from PPO and sampling, our EG-Rerank (and also EG-Rerank+) steadily outperforms slateQ, CTR-AC, and their variants. 
Not only in ranking tasks, we conjecture the similar phenomenons exist in other combinatorial optimization scenarios.





\begin{figure*}[ht]
\centering
\includegraphics[scale=0.3]{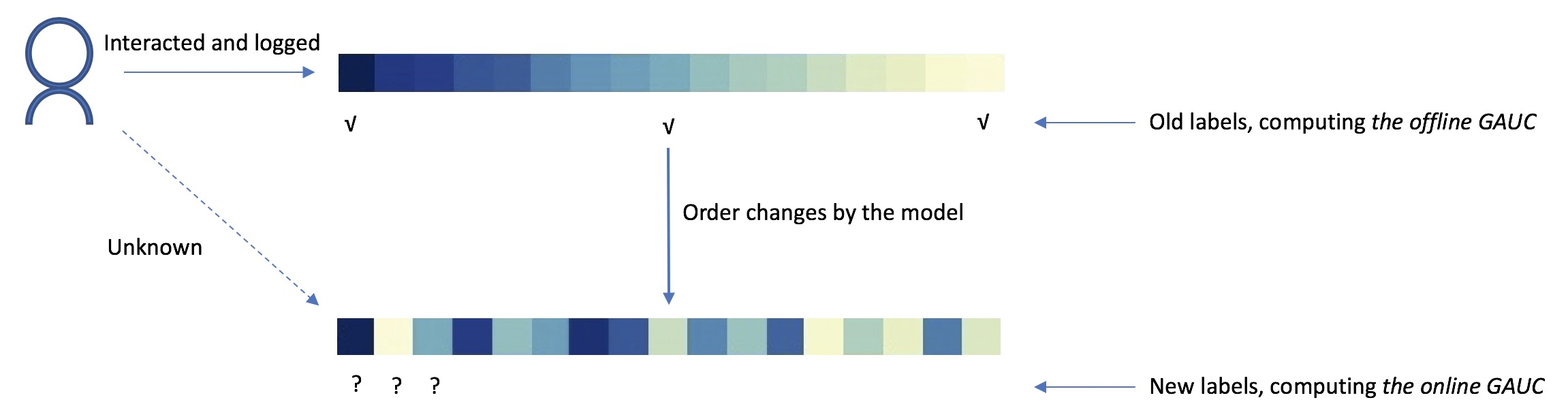}
\caption{The workflows of computing \emph{offline GAUC} and \emph{online GAUC}, respectively. Colored squares appear as items.}
\label{fig:ita}
\end{figure*}

\begin{fact}
Roughly, EG-Rerank+ are expected to output top $0.2\%$ lists in our simulation experiment.
\end{fact}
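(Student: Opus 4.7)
The plan is to establish the claim empirically via Monte Carlo estimation of the quantile of EG-Rerank+'s true score within the distribution of true scores over random lists in the simulation environment. Because the claim is qualified by ``Roughly'' and the simulator's scoring rule is deterministic and explicitly defined, a fully rigorous combinatorial argument is unnecessary; it suffices to produce a sufficiently tight empirical estimate of the tail probability.

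First, I would draw $N$ i.i.d.\ random permutations $L^{(1)}, \ldots, L^{(N)}$, where each $L^{(j)}$ consists of $15$ items sampled uniformly without replacement from the pool of $1{,}000$ items. For each permutation, I compute the true score $S(L^{(j)}) = \sum_{i=1}^{15} f(L_i^{(j)}, L^{(j)})$ using the simulator's rule. Second, taking the EG-Rerank+ true score of approximately $6.97$ from Table~\ref{tab:offlineexp}, I compute the empirical upper tail $\hat{q} = \frac{1}{N}\sum_{j=1}^{N} \mathbf{1}\!\left[ S(L^{(j)}) \geq 6.97 \right]$ and verify that $\hat{q} \leq 0.002$. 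Since the per-item scores lie in $[0,1]$ and hence $S \in [0,15]$, the DKW inequality gives $\sup_{s} |\hat{F}_N(s) - F(s)| \leq \epsilon$ with probability at least $1 - 2 e^{-2N\epsilon^2}$; choosing $N = 10^6$ and $\epsilon = 10^{-3}$ yields a negligible failure probability, which more than covers the $0.2\%$ precision claimed.

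The main obstacle is that the full permutation space has $\binom{1000}{15} \cdot 15!$ elements, so exact enumeration is infeasible and Monte Carlo sampling is the only practical route. A secondary subtlety is that the reported score $6.97$ is itself an average across test instances with some noise; however, Table~\ref{tab:offlineexp} already lists standard deviations on the order of $10^{-2}$, which is dominated by the gap between EG-Rerank+ and both the random-list mean and the supervised baselines clustered near $5.5$--$5.7$. Finally, uniform random permutations are the natural reference measure here, since the training set itself was drawn uniformly, so the resulting quantile rank is the appropriate operationalization of ``top $0.2\%$ lists.''
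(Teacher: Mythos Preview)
Your approach is genuinely different from the paper's, and it has a subtle gap that the paper's argument avoids.

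The paper does \emph{not} estimate a tail probability by Monte Carlo. Instead it introduces an auxiliary algorithm \textsc{ENUMERATE}-$k$: for each test instance (a fixed candidate set of $15$ items), sample $k$ permutations uniformly and output the one with the highest evaluator score. An additional experiment (their Figure~\ref{fig:egenu}) shows empirically that EG-Rerank+'s average true score coincides with that of \textsc{ENUMERATE}-$500$. The paper then models the quantile rank of the best of $n$ uniformly sampled permutations as $\max(X_1,\dots,X_n)$ with $X_i\sim\mathrm{Unif}[0,1]$, computes
\[
\mathbb{E}\bigl[\max(X_1,\dots,X_n)\bigr]=\int_0^1 x\cdot n x^{\,n-1}\,dx=\frac{n}{n+1},
\]
and plugs in $n=500$ to get $500/501\approx 0.998$, i.e.\ top $0.2\%$. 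So the $0.2\%$ figure comes from an order-statistics identity, not from a tail count.

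The gap in your plan is the reference distribution. You sample lists by drawing a fresh item subset each time and then compare the resulting marginal score distribution to the single number $6.97$. But $6.97$ is the \emph{average} of EG-Rerank+'s score over many test instances, each with its own candidate set, and the re-ranking question is a per-instance one: given the candidate set, where does the output sit among that set's $15!$ permutations? Because $r(L_i)$ depends on item features, different candidate sets have different achievable score ranges, so $\Pr\bigl[S(\text{random list})\ge 6.97\bigr]$ mixes between-set and within-set variation and need not equal the average within-set quantile. (Concretely: if half the candidate sets have all permutation scores in $[0,2]$ and half in $[10,12]$, and EG-Rerank+ reaches the $95$th percentile on each, its average score is about $6.9$ yet your tail estimate would be near $50\%$.) The paper's \textsc{ENUMERATE}-$k$ device sidesteps this because both methods are run on identical candidate sets, and \textsc{ENUMERATE}-$k$'s per-instance quantile is exactly $n/(n+1)$ regardless of the set's score range. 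If you want a Monte Carlo route that actually targets the claimed quantity, you would need to fix each test candidate set, sample permutations of \emph{that} set, locate EG-Rerank+'s output within that empirical CDF, and then average the resulting quantiles across instances.
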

\begin{proof} 
Consider that we arrange the lists in a interval $[0, 1]$, where adjacent lists have the same spacing. We can use the quantile from $0$ to $1$ (from worst to best) to represent the ranking. Then the ranking of a sampled list can be approximated by the uniform distribution in $[0, 1]$.
From the result table~\ref{tab:offlineexp}, we find that EG-Rerank+ has the similar performance as ENUMERATE-500 does. 
Therefore, we only need to estimate the expected ranking of the above best list as $\mathbb{E}[max(X_1, X_2, ..., X_n)]$,
where $X_i$ is the random variable of the ranking of the $i$-th sampled list. 

Then we can write the expectation in the integral form:
\begin{equation}
\begin{aligned}
  \mathbb{E}[max(\{X_i\})] &= \int_0^1 \mathbb{E}[max(\{X_i\}) = x] dx\\
  &=\int_0^1 x \times p(max(\{X_i\}) = x) dx \\
  &= \int_0^1 x \times nx^{n-1} dx = \frac{n}{n+1}\\
\end{aligned}
\end{equation}
Here probability density function $p(max(\{X_i\}) = x) = \sum_{i=1}^n p(X_i=x,X_{j\neq i}<x) =nx^{n-1}$. Replacing $n$ with 500 into Equation~(21) yields the desired result.
\end{proof}

\begin{table*}[ht]
\centering
\resizebox{.99\textwidth}{!}{
\vspace{0.8em}
\begin{tabular}{c|cccccccccccccc|c}
\toprule
Re-ranking & Day 1& Day 2& Day 3& Day 4& Day 5& Day 6& Day 7& Day 8& Day 9& Day 10& Day 11& Day 12& Day 13& Day 14& Offline GAUC \\
\midrule
RankNet* & 6.11\% & 7.17\% & 8.22\% & \textbf{8.15}\% & 4.19\% & 6.78\% & 7.84\% & 5.95\% & 4.17\% & 5.16\% & 4.18\% & 4.70\% & 3.52\% & 7.34\% & \textbf{0.783}\\
EG-Rerank & \textbf{6.57}\% & \textbf{9.54\%} & \textbf{10.45\%} & 7.98\% & \textbf{5.70\%} & \textbf{10.21\%} & \textbf{10.08\%} & \textbf{9.02\%} & \textbf{7.37\%} & \textbf{6.53\%} & \textbf{6.00\%} & \textbf{7.56\%} & \textbf{6.84\%} & \textbf{10.65}\% & 0.512 \\
\bottomrule
\end{tabular}
}
\vspace{0.8em}
\caption{{Offline GAUC and online performance of models}. Each column ``Day~i'' describe the conversion rate gap after we add a re-ranking method.}
\label{tab:one}

\end{table*}

{\textbf{Expectation Analysis for Searching in the Combinatorial Space:}}
We set up an additional experiment to evaluate the quality of the output list of EG-Rerank+. Consider an inefficient algorithm ENUMERATE-k: we uniformly sample $k$ lists and use the evaluator to decide and output the best list among them. 
\vspace{-1em}
\begin{figure}[h]
\centering
\includegraphics[scale=0.24]{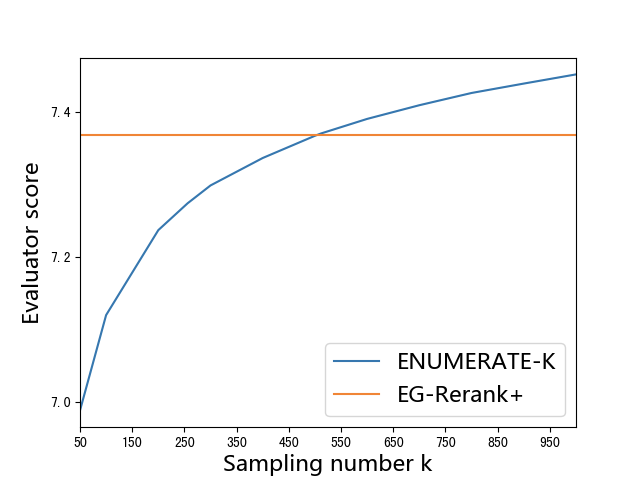}
\caption{Performances of ENUMERATE-k with different k. Due to its low efficiency, we plot the curve by a few piece-wise linear functions.}
\label{fig:egenu}
\end{figure}
Without limitation of time, this algorithm can even generate the global optimal list from the view of evaluator (i.e. enumerate all permutations and select the best one). We plot a curve to find the proper $k$ such that EG-Rerank+ has a similar performance with ENUMERATE-k. Then, we can approximate the combinatorial space searching ability of EG-Rerank+ in expectation. In Figure~\ref{fig:egenu}, we can see EG-Rerank+ and ENUMERATE-500 have almost the same performance.


\section{Online Cases and Experiments}

In the following two subsections, we first share two real cases of AliExpress Search to show the existing inconsistencies between data-based ranking metrics and conversion rate (CR). In our work, CR is the number of purchases divided by the number of arriving customers.
Group AUC~\cite{zhou2018deep} (GAUC) is commonly used in ranking scenarios when scores of items in different lists are not comparable. GAUC only considers the item pairs in the same list (group), and can be computed as $\frac{1}{|L|}\sum_{l\in L} AUC_l$, where $L$ is the set of testing lists.  
In our work, we focus on the purchase events so that only purchased items have positive labels. For a fair comparison, pages with no purchase will not be included for GAUC computing (in E-commerce, there are so many lists without a purchase).
We consider two types of GAUC as follows (also see Figure~\ref{fig:ita}):
\begin{itemize}
  \item Offline GAUC: the one computed \emph{before} a model changes item order (i.e. using the old labels).
  \item Online GAUC: the one computed \emph{after} a model changes item order (i.e. using the new labels).
\end{itemize}

The GAUC computed in training and validation set evaluation, which uses the labels in old orders, is offline GAUC. It is commonly used in the LTR research and applications. Intuitively, online GAUC is an alternative to the offline GAUC. Online GAUC needs the new feedback from users after model changes the order. These two metrics have a gap that should not be ignored \emph{unless we assume the behaviors of customers on items will keep unchanged after the order changes}. In the following section, we will show both of them are problematic when examining models.

\subsection{Inconsistency between Offline GAUC and CR}
\label{section:offauc}
Table ~\ref{tab:one} shows the offline GAUC of two ranking strategies during a week. 
RankNet*\cite{Burges:ranknet} is an industrial-level fine-tuned pair-wise model, and it follows the design of RankNet and has the best online performance in long-term experiments. 

From the Table~\ref{tab:one}, we observe that EG-Rerank achieves poor offline GAUC, but greatly improves the conversion rate by more than $2\%$. 
This is again a strong evidence to answer \textbf{RQ1}
and blindly encouraging models to put the best items on the top (as the first strategy does) also cannot achieve consistently good performance.

\subsection{Inconsistency between Online GAUC and CR}
\label{section:onlauc}

Table~\ref{tab:onlineauc1} shows the online GAUC and actual performance of three deployed strategies in the past. Without knowing them, strategy 3 appears to be the best one according to online GAUC. However, it is problematic and actually the worst strategy due to its low conversion rate. The result clearly shows that online GAUC also cannot reflect actual model performance when deployed online.

\begin{table}[h]
\centering
\begin{tabular}{c|ccccc}
\toprule
 Online GAUC & Day 1 & Day 2 & Day 3 & Day 4\\
\midrule
Strategy 1 & 0.786 & 0.787 & 0.794 & 0.792 \\
Strategy 2 & 0.788 & 0.786 & 0.789 & 0.793  \\
Strategy 3 & 0.805 & 0.803 & 0.800 & 0.797 \\

\toprule
 Conversion Rate Gap &Day 1 & Day 2 & Day 3 & Day 4\\
\midrule
Strategy 1 & 0.00\%	& 0.00\%	& 0.00\%	& 0.00\% & \\
Strategy 2 & 0.72\%	& -0.36\% &	-0.12\% &	-0.49\% & \\
Strategy 3 & -8.45\%	& -8.25\% &	-7.25\%	& -10.13\% & \\

\bottomrule
\end{tabular}
\vspace{0.8em}
\caption{Online GAUC and performances during four days.}
\label{tab:onlineauc1}
\vspace{-1.5em}
\end{table}

The above counter-intuitive phenomenon happens since the i.i.d condition has been violated: different models have different distributions of output lists, which are not compatible at all. Consider such an example: a model always picks the best item from its view, and then adds lots of irrelevant items to the list. In this case, the first item receives a high score and the irrelevant items receive low scores from the model. When no one purchases irrelevant items in such a list, the model achieves high online AUC, MAP, and NDCG, but performs poorly in terms of conversion rate. 


This experiment reinforces the importance of the proposed evaluator: it is not only a module in our framework, but also \emph{a novel metric for evaluating rankings} which more closely reflects online performance than existing metrics and can solve the issue proposed in \textbf{RQ1}.

\subsection{Online A/B tests}
We set up a few online A/B tests on AliExpress Search. All re-ranking methods are deployed when customers use keyword searching, which is also one of the main composition of the flow in our platform. In our long-term trials, fined-tuned RankNet* has been proven to achieve the best online performance among existing re-ranking methods. 
Models can access offline data in the last two weeks for training, and the offline data contains $O(10^8)$ displayed lists and $O(10^6)$ purchase records. In addition, we hold a latency testing for different strategies under a high load condition. The latency results are shown in table~\ref{tab:onlinelatency}.
\begin{table}[h]
\centering
\small
\begin{tabular}{c|ccc}
\toprule
 & No Re-rank & RankNet* & EG-Rerank\\
\midrule
Average latency & 106ms & 113ms & 115ms \\
\bottomrule
\end{tabular}
\vspace{0.8em}
\caption{Testing latency. They are held in the same server.}
\vspace{-1.2em}

\label{tab:onlinelatency}
\end{table}

In each A/B test, two models serve a non-overlapping random portion of search queries. Each model needs to serve $O(10^6)$ users and output $O(10^7)$ pages per day. Since the online environment varies rapidly, to get reliable results, all A/B tests are held for more than a week to reduce the variance. We compute the mean and standard deviation during A/B testing. The result is shown in Table \ref{tab:online}.

\begin{table}[h]
\centering
\small
\resizebox{.48\textwidth}{!}{
\begin{tabular}{c|ccc}
\toprule
 Methods& Online GAUC & Evaluator* & CR gap\\
\midrule
No Re-rank & 0.758 $\pm$ 0.004 & +0.00\% & +0.00\% \\
RankNet* & 0.789 $\pm$ 0.002 & +15.2\% & +5.96\% $\pm$ 0.017  \\
\midrule\
RankNet* & 0.793 $\pm$ 0.002 & +0.00\% & +0.00\% \\
EG-Rerank & 0.783 $\pm$ 0.003 & +9.94\% & +2.22\% $\pm$ 0.011  \\
\midrule

EG-Rerank & 0.774 $\pm$ 0.015 & +0.00\% & +0.00\% \\
EG-Rerank+ & 0.786 $\pm$ 0.003 & +0.81\% & +0.63\% $\pm$ 0.008  \\
\bottomrule
\end{tabular}
}
\vspace{0.8em}
\caption{Online performance. In Conversion Rate Gap columns, the first row is the baseline.}
\vspace{-1.2em}

\label{tab:online}
\end{table}

Here, ``Evaluator*'' denotes the score of the evaluator model which predicts model performance before the A/B tests start, and it is independent with the one used in EG-Rerank.
We can see that the evaluator score is a more consistent and accurate metric than online GAUC. 
The satisfying performance of EG-Rerank+ also provides an positive answer to \textbf{RQ2}. 

\section{Conclusions}
In E-commerce ranking tasks, many studies have lost connections with the real-world, but only focused on offline measurements~\cite{wagstaff2012machine}. To evaluate and improve actual performance, we propose the evaluator-generator framework for E-commerce and EG-Rerank+ for CR optimizations.
We demonstrate the evaluator score is a more objective metric which leads to improvements in online situations.
Through online testing in a large-scale e-commerce platform, the proposed framework has improved the conversion rate by $2\%$, which translates into a significant among of revenue.

\ifCLASSOPTIONcaptionsoff
 \newpage
\fi

\begin{IEEEbiographynophoto}{Guangda Huzhang} received the PhD degree from Nanyang Technological University, Singapore, in 2018. After that, he joined Alibaba and worked as a Machine Learning Engineer up to now.
\end{IEEEbiographynophoto}

\begin{IEEEbiographynophoto}{Zhen-Jia Pang}
received the master degree from Nanjing University, China, in 2020. After that, he joined Huawei and worked as a Machine Learning Engineer up to now.
\end{IEEEbiographynophoto}

\begin{IEEEbiographynophoto}{Yongqing Gao}
is a master's degree candidate in Nanjing University, China. His current research interests include Reinforcement learning and Recommendation system.
\end{IEEEbiographynophoto}

\begin{IEEEbiographynophoto}{Yawen Liu}
is a Master student in Nanjing University, he focuses on Deep Reinforcement Learning, as well as its applictions in Autonomous Driving and Recommendation System.

\end{IEEEbiographynophoto}

\begin{IEEEbiographynophoto}{Weijie Shen} is a Master student in Nanjing University, his research interests include reinforcement learning, imitation learning as well as their applications in Autonomous Driving and Recommendation System.

\end{IEEEbiographynophoto}

\begin{IEEEbiographynophoto}{Wen-Ji Zhou}
received the BSc and MSc degree in computer science from Nanjing University, China, in 2016 and 2019, respectively. He joined the Alibaba-INC in June 2019 and worked as a Machine Learning Engineer up to now.

\end{IEEEbiographynophoto}

\begin{IEEEbiographynophoto}{Qing Da}
received the BSc and MSc degrees in computer science from Nanjing University, China, in 2010 and 2013 respectively. He is currently a senior staff algorithm engineer in the search algorithm team of the Department of International AI at Alibaba Group. His research interests are reinforcement learning and applications of machine learning.\end{IEEEbiographynophoto}

\begin{IEEEbiographynophoto}{An-Xiang Zeng} is a Senior Staff Algorithm Engineer and Director of Alibaba. He is the Head of the International Search and Recommendation of Alibaba. He is pursuing his PhD in Nanyang Technological University, Singapore. He has been working in the search and recommendation field for more than 10 years. His research focuses on search, recommendation and reinforcement learning. He has published more than 10 research papers in leading international conferences and journals.
\end{IEEEbiographynophoto}
\vspace{-0.5em}


\begin{IEEEbiographynophoto}{Han Yu} is a Nanyang Assistant Professor in the School of Computer Science and Engineering, Nanyang Technological University, Singapore. He held the prestigious Lee Kuan Yew Post-Doctoral Fellowship from 2015 to 2018. He obtained his PhD from the School of Computer Science and Engineering, NTU. His research focuses on federated learning and algorithmic fairness. He has published over 150 research papers and book chapters in leading international conferences and journals. He is a co-author of the book \textit{Federated Learning} - the first monograph on the topic of federated learning. His research works have won multiple awards from conferences and journals.
\end{IEEEbiographynophoto}
\vspace{-0.5em}

\begin{IEEEbiographynophoto}{Yang Yu}
 received the BSc and PhD degree in computer science from Nanjing University, China, in 2004 and 2011, respectively. He joined the Department of Computer Science \& Technology at Nanjing University as an Assistant Researcher in 2011, and is currently Professor of the School of Artificial Intelligence. He has co-authored the book Evolutionary Learning: Advances in Theories and Algorithms, and published more than 40 papers in top-tier international journals and and conference proceedings. He has been recognized as a AI's 10 to Watch by IEEE Intelligent Systems (2018), PAKDD Early Career Award (2018), CCF-IEEE CS Early Career Young Scientist (2020), and was invited to give an IJCAI'18 Early Career Spotlight. He co-founded the Asian Workshop on Reinforcement Learning.
\end{IEEEbiographynophoto}
\vspace{-0.5em}

\begin{IEEEbiographynophoto}{Zhi-Hua Zhou} (S’00-M’01-SM’06-F’13) received the BSc, MSc and PhD degrees in computer science from Nanjing University, China, in 1996, 1998 and 2000, respectively, all with the highest honors. He joined the Department of Computer Science \& Technology at Nanjing University as an Assistant Professor in 2001, and is currently Professor, Head of the Department of Computer Science and Technology, and Dean of the School of Artificial Intelligence; he is also the Founding Director of the LAMDA group. His research interests are mainly in artificial intelligence, machine learning and data mining. He has authored the books Ensemble Methods: Foundations and Algorithms, Evolutionary Learning: Advances in Theories and Algorithms, Machine Learning (in Chinese), and published more than 150 papers in top-tier international journals or conference proceedings. He has received various awards/honors including the National Natural Science Award of China, the IEEE Computer Society Edward J. McCluskey Technical Achievement Award, the PAKDD Distinguished Contribution Award, the IEEE ICDM Outstanding Service Award, the Microsoft Professorship Award, etc. He also holds 24 patents. He is the Editor-in-Chief of the Frontiers of Computer Science, Associate Editor-in-Chief of the Science China Information Sciences, Action or Associate Editor of the Machine Learning, IEEE Transactions on Pattern Analysis and Machine Intelligence , ACM Transactions on Knowledge Discovery from Data, etc. He served as Associate Editor-in-Chief for Chinese Science Bulletin (2008- 2014), Associate Editor for IEEE Transactions on Knowledge and Data Engineering (2008-2012), IEEE Transactions on Neural Networks and Learning Systems (2014-2017), ACM Transactions on Intelligent Systems and Technology (2009-2017), Neural Networks (2014-2016), etc. He founded ACML (Asian Conference on Machine Learning), served as Advisory Committee member for IJCAI (2015-2016), Steering Committee member for ICDM, PAKDD and PRICAI, and Chair of various conferences such as Program co-chair of AAAI 2019, General co-chair of ICDM 2016, and Area chair of NeurIPS, ICML, AAAI, IJCAI, KDD, etc. He was the Chair of the IEEE CIS Data Mining Technical Committee (2015-2016), the Chair of the CCF-AI (2012-2019), and the Chair of the CAAI Machine Learning Technical Committee (2006-2015). He is a foreign member of the Academy of Europe, and a Fellow of the ACM, AAAI, AAAS, IEEE, IAPR, IET/IEE, CCF, and CAAI.
\end{IEEEbiographynophoto}




\end{document}